\documentclass[10pt]{amsart}
\usepackage[foot]{amsaddr}
\usepackage[svgnames]{xcolor}
\usepackage{todonotes}

\usepackage{amsmath,amsfonts,mathptmx,amsthm}
\usepackage{tikz}

\usepackage[pagebackref=true]{hyperref}
\hypersetup{colorlinks = true,extension = notused,linkcolor = DarkBlue,anchorcolor = black,citecolor = Navy,filecolor = black,pagecolor = black,urlcolor = blue}

\usepackage[numbers,sort&compress]{natbib}
\usepackage{hypernat}

\usetikzlibrary{cd,arrows,shapes}
\tikzstyle{tensor}=[rectangle,thick,draw=black,fill=blue!15,minimum size=4mm]

%% comment out the following two lines and uncomment the two below that if you want the \PL and \PR to be shades of gray instead of red and yellow

\tikzstyle{PL}=[circle,thick,draw=black,fill=red!15,minimum size=4mm]

\tikzstyle{PR}=[circle,thick,draw=black,fill=yellow!15,minimum size=4mm]

%\tikzstyle{mat}=[circle,thick,draw=black,fill=black!10,minimum size=4mm]

%\tikzstyle{mat2}=[circle,thick,draw=black,fill=black!30,minimum size=4mm]

\newcommand{\rR}{\mathbb{R}}
\newcommand{\C}{\mathbb{C}}
\newcommand{\qh}{\hat{q}}

\renewcommand{\>}{\rangle}
\newcommand{\Phr}{\mathrm{X}}
\newcommand{\Voc}{\mathrm{W}}
\newcommand{\PR}{P_{R}}
\newcommand{\PL}{P_{L}}

\newcommand{\W}{W}
\newcommand{\V}{V}

\DeclareMathOperator{\tr}{tr}
\DeclareMathOperator{\mat}{Mat}
\DeclareMathOperator{\Hom}{Hom}

\theoremstyle{plain}
\newtheorem{theorem}{Theorem}
\newtheorem{proposition}{Proposition}

\theoremstyle{remark}
\newtheorem*{remark*}{Remark}
\theoremstyle{definition}

\begin{document}
\title{Language as a matrix product state}

\author{Vasily Pestun$^1$}
\address{$^1$ Institut des Hautes \'Etudes Scientifiques (IH\'ES), Bures-sur-Yvette, France}
\email{pestun@ihes.fr}

\author{John Terilla$^2$}
\address{$^2$ Queens College and CUNY Graduate Center\\
        The City University of New York\\ New York, NY}
%% Note the doubled @@:
\email{jterilla@gc.cuny.edu}

\author{Yiannis Vlassopoulos$^1$}
%\address{Institut des Hautes \'Etudes Scientifiques (IH\'ES), Bures-sur-Yvette, France}
\email{yvlassop@ihes.fr}

\begin{abstract}
We propose a statistical model for natural language that begins by considering language as a monoid, then representing it in complex matrices with a compatible translation invariant probability measure. We interpret the probability measure as arising via the Born rule from a translation invariant matrix product state.
\end{abstract}

\maketitle

\tableofcontents

\section{Introduction}  
Statistical language modelling, whose aim is to capture the joint probability distribution of sequences of words, has applications to problems including information retrieval, speech recognition, artificial intelligence,  human-machine interfaces, translation, and natural language problems that involve incomplete information.  
Early successes of statistical language models in industry include next-word prediction and vector embeddings of words based on colocation with reasonable performance on word similiarity exams.  Efforts to build on early successes encounter difficulties arising from the high-dimensionality of the data---the number of meaningful texts in a lanugage is exponentially smaller than the number of texts that a room full of randomly typing monkeys could produce \cite{GOODMAN2001403}.  One approach to address ``the curse of high-dimensionality''  is to truncate sequences under consideration to finite length phrases, or $n$-grams, and employ a hidden Markov model.  Since hidden Markov models essentially cutoff correlations between words beyond a fixed distance, the success of this approach depends on the application.  For example, $3$-gram and $4$-gram models have been employed effectively for speech recognition and translation, applications in which very long distance correlations are of limited importance \cite{brown2001products}.  
%Certainly words that are in close proximity are more highly correlated than words that are far apart, 
However, $n$-gram models belie the \emph{essential critical behavior} inherent in language \cite{zipf1949human}.  Human languages, like many biological systems including families of proteins, genomes, neurons in the brain, and the notes in a musical symphony, have significant long-range correlations that decay with a power law \cite{barbieri2012criticality,mora2011biological,2017arXiv170604432D}.  In contrast, any Markov or hidden-Markov system, such as an $n$-gram model, has long range correlations that decay exponentially.

Recently, long-short term memory (LSTM) recurrent neural networks have been employed to produce statistical language model applications that considerately outperform those based on hidden Markov models.  Notably, Google's Neural Machine Translation System \cite{DBLP:journals/corr/WuSCLNMKCGMKSJL16} and the technology in Google Voice \cite{DBLP:journals/corr/SakSB14} have advanced the state of the art in both translation and speech recognition.  While much is unknown about how these deep networks operate, new research indicates that it is hard to match
long range and higher order statistics of natural languages even with LSTM recurrent neural networks \cite{2016arXiv160606737L}.  For certain applications such as translating a few phrases, matching higher order statistics may not be very important, but for other artificial intelligence applications, such as machine determination of humorous or deceptive narratives, they are essential.

In order to develop a statistical language model capable of capturing the higher order statistics of language, we turn to quantum statistical physics, which contains models such as solvable lattice models that exhibit correlation functions that decay with the power law---the same kind of critical behavior as biological systems.  Unlike classical statistical physics, one spacial dimension suffices to exhibit criticality in quantum statistical physics \cite{korepin2004}.  So, even a one dimensional quantum statistical language model could be a better alternative to LSTM recurrent neural networks, which may be likened to \emph{classical} statistical physics.  Entanglement in a quantum many body system is the metaphorical vehicle for statistical correlation in language, and serves as the proposed method to attack the high-dimensionality of the data.  The number of basis states in a quantum many body system makes the state-space far too large to work with entirely but the number of physically relevant states occupy a subspace of exponentially smaller dimension, accessible by restricting to low-energy.   

In this paper, we introduce a simple translation-invariant quantum statistical language model on a one-dimensional lattice which we call a \emph{trace-density model}.    While this trace-density model isn't critical,  it is an experimental first step toward a critical quantum model of language.   The model involves matrix-product-states (MPS) which can approximate power--law decays ``over quite long distances'' \cite{2016arXiv160505775M}.   Two cubic constraintes are introduced.  These constraints are physically motivated and mathematically related to the moment map of the action of the unitary group $U(d)$ on complex $d\times n$ matrices.   Training algorithms based on maximizing entropy and minimizing energy, similar to what's described in \cite{pmlr-v20-bailly11,2017arXiv170901662H} can be developed and density matrix renormalization  \cite{2016arXiv160505775M} can be used.  The model is introduced first using representation theoretic language and then interpretted using the physical language of MPS.  A related language model based on an isometric tensor network is considered in \cite{tensorlang}.

\subsection{Acknowledgements} The authors would like to thank Maxim Kontsevich  and Miles Stoudenmire for helpful and stimulating discussions.  The research of V.P. on this project has received funding from the European Research Council (ERC) under the European Union's Horizon 2020 research and innovation program (QUASIFT grant agreement 677368); J.T. was supported in part by a grant from the US Army, Research, Development and Engineering Command, Mathematical Sciences Division, 711567-MA-II; and Y.V. received funding from Simons Foundation Award 385577.

\section{The trace-density model for language}
A mathematical model called a \emph{trace-density model} for a corpus of text will be described in three steps.  First, the input structures related to a corpus of text will be explained and some notation will be fixed.  Second, the  concept of  \emph{a trace-density representation} for a corpus will be defined.  Third, the property that a representation must possess in order to be considered a trace-density \emph{model} will be given.

\subsection{The structures in a corpus of text}
A corpus of text \[C=w_{i_1} w_{i_2} \cdots w_{i_M}\] is a finite sequence of words from a vocabulary $\{w_1, \ldots, w_n\}=:\Voc(C)$.  We refer to the elements of the vocabulary as words, and denote typical words with the letter $w$, but the vocabulary can be comprised of any symbols (letters, bits, ...) representing the atomic elements from which the corpus is constructed as a sequence.  Let $\Phr(C)$ denote the set of all phrases contained in the corpus $C$, a phrase being a finite subsequence consisting of adjacent elements.  The set $\Phr(C)$ of phrases has more structure than merely a set.  It is \emph{graded}---each phrase has a well defined length given by the number of words comprising that phrase and the set of phrases is the disjoint union of sets phrases with same word length: \[\Phr(C)=\bigsqcup_{k=1}^{M} \Phr^k(C) \text{ where }\Phr^k(C):=\{\text{phrases in the corpus $C$ consisting of $k$ words}.\}\]  Also, $\Phr(C)$ has a (partially defined) product $\Phr(C) \times \Phr(C) \to \Phr(C)$ defined by concatanation of phrases.  By adding a formal zero phrase to each graded component of $\Phr(C)$, the product can be extended to all of $\Phr(C)$ by defining the product of two phrases to be zero if the concatanation is not a phrase contained in the corpus.  The product is compatible with the grading
\[\Phr^k(C)\times \Phr^l(C) \to \Phr^{k+l}(C).\] Viewed with the structure, the set of phrases $\Phr(C)$ is a graded monoid, a quotient, in fact, of the free monoid generated by the vocabulary $\Voc(C)=\Phr^1(C)$. 

Moreover, for each phrase length $k$, there is a probability distribution, call it $\qh_k$, on the set $\Phr^k(C)$.  Explicitly, $\qh_k(x)$ is the number of times the phrase $x=w_{i_1}\cdots w_{i_k}$ appears in the corpus $C$, divided by $|\Phr^k(C)|$, the total number of phrases of length $k$.  If the corpus is sufficiently large, this probability distribution is considered an approximation to a nonexistent, idealized probability distribution on all phrases in the language, of which the corpus is an observed sample.  The goal is to model the collection of these idealized probability distributions.

\subsection{Trace density representations}
A \emph{density} on a Hilbert space is a positive semi-definite operator
(in the scope of this paper we do not assume that density
is normalized by unit trace). Here, we fix a finite dimension $d$ and work entirely with $\C^d$ with its standard inner product so that operators are identified with matrices.
%A $d\times d$ density $H$ defines a function $\C^d \to \rR$ by $v \mapsto v^* H v$.  The superscript $~^*$ denotes conjugate transpose.  The fact that $H$ is positive and self-adjoint guarantees that the number $v^* H v$ is real and positive.
A $d\times d$ density $P$ defines a nonnegative real valued function on the set $\mat_{d\times d}(\C)$ of $d\times d$ complex matrices by $M\mapsto \tr(MP M^*)$.  Here, the superscript $*$ denotes complex conjugate transpose  and $\tr$ denotes trace.  The $(i,i)$ element of $MP M^*$ is the nonegative real number $\<v,Pv\>$ where $v\in \C^d$ is the $i$-th row of the matrix $A$, and $\tr(MP M^*)$ is the sum of these numbers as $v$ ranges over the columns of $A$.

A \emph{$d$-dimensional trace density representation of a corpus $C$} consists of the following data:
\begin{enumerate}
\item A pair of densities $\PL$, $\PR$ on $\mathbb{C}^{d}$ such that $\tr \PL \PR = 1$
\item a function $D:\Voc(C)\to \mat_{d\times d}(\C)$ called the \emph{dictionary}. \end{enumerate}
The dictionary $D$ assigns a $d\times d$ complex matrix $M_i$ to each word $w_i$ in the vocabulary and extends to a function $\Phr(C)\to M_{d\times d}(\C)$ on all phrases by mapping a phrase to the product of matrices assigned to the words that comprise the phrase.  That is, the phrase $x=w_{i_1}\cdots w_{i_k} \in \Phr^k(C)$ is mapped to the matrix $M=M_{i_1}\cdots M_{i_k}$.  The \emph{trace density} of the trace density representation is the nonegative, real valued function $q:\Phr(C) \to \rR$ defined as the composition:
\[
\begin{tikzcd}
	  \Phr(C)	\ar{rr}{}
	  	\ar[bend left]{rrr}{q}
	& &M_{d\times d}(\C) \ar{r}
	& \rR \\[-12pt]
	w_{i_1}\ldots w_{i_k} \arrow[rr,mapsto] && M_{i_1}\cdots M_{i_k} \arrow[r,mapsto] & \tr\left(\PL \left(M_{i_1} \cdots M_{i_k}\right) \PR \left(M_{i_1} \cdots M_{i_k}\right)^*\right)
\end{tikzcd}
\]

Notice that $ \tr (P_L M P_R M^{*}) \geq 0$ because for any
two positive semi-definite operators $P, Q$ it holds that $\tr (PQ) \geq 0$
\footnote{Indeed, for positive semi-definite operators $P,Q$ let $A$ and $B$ be
Hermtian operators such that $P = A^2$, $Q = B^2$, then $\tr PQ = \tr AABB = \tr BAAB = \tr (AB)^{*} AB$}, and $M P_{R} M^{*}$ is positive semi-definite operator.

\subsection{The trace-density model of a corpus of text}
A trace density representation for a corpus of text $C$ will be considered a \emph{trace density model} for that corpus if the trace density of a phrase approximates the probability of the phrase $x$ appearing; that is, if $q(x) \approx \qh_k(x)$ for all phrases $x\in \Phr^k(C)$.  The nature of the approximation and its dependence on $k$ is left vague.

A trace density model for a corpus $C$ captures the joint probability distribution of sequences of words efficiently since all of the information is contained in the dicitonary $D$ that translates individual words to $d\times d$ matrices.  The model requires no additional memory to store the matrices assigned to phrases, sentences, paragraphs, etc...  Only a relatively efficient computation is required: a matrix product that is polynomial in the dimension of the representation and linear in the number of words.

Note also that a trace density model is translation invariant.  The probability $q(x)$ of the phrase $x$ appearing is independent of its position in the corpus.

\subsection{Graphical language for tensor networks}
Fix a corpus $C$ with vocabulary $\Voc(C)=\{w_1, \ldots, w_n\}$ and let $\W$ denote the complex $n$-dimensional vector space generated by the vocabulary.  Let $\V=\C^d$.  The dictionary $\{w_i \mapsto M_i\}$ of a $d$-dimensional trace-density representation can be assembled into a single map $M:\W \to \Hom(\V, \V)$ by extending the assignment $w_i \mapsto M_i$  linearly.  
%Using the hom-tensor duality, $M$ may equivalently be viewed as an element of $\V\otimes \W^* \to \W$.  
The single map $M$ is described by $nd^2$ complex numbers $\{M_{i a b}\}$ and defines a tensor of \emph{order} $(n,d,d)$.  
A particular  number $M_{iab}$ and the entire tensor $M$ can be denoted graphically using a tensor network diagram as
 \begin{center}
    \begin{tikzpicture}
        \node[tensor] (m) at (0,0) {};
        \node[] (l) at (-1,0) {$a$};
        \node[] (r) at (1,0) {$b$};
        \node[] (i) at (0,1) {$i$};
        \draw [thick] (m) -- (l);  
        \draw [thick] (m) -- (r);
        \draw [thick] (m) -- (i);
    \end{tikzpicture}
    \begin{tikzpicture}
        \node[tensor] (m) at (0,0) {};
        \node[] (l) at (-1,0) {};
        \node[] (r) at (1,0) {};
        \node[] (b) at (0,1) {};
        \draw [thick] (m) -- (l);  
        \draw [thick] (m) -- (r);
        \draw [thick] (m) -- (b);
    \end{tikzpicture}
\end{center}
These tensor network diagrams are oriented and rotating a labeled diagram upside down indicates complex conjugation.  Connected edges denote contraction of indices.    In this pictorial language,
the following diagram represents the product of matrices $M_{i_1}M_{i_2}\cdots M_{i_k}$ 
\begin{center}
    \begin{tikzpicture}
    % bottom row y coord 0
        \node[tensor] (m1) at (0,0) {};
        \node[tensor] (m2) at (1,0) {};
        \node[] (dots) at (2,0) {$\cdots$};
        \node[tensor] (mk) at (3,0) {};
        \node[] (l) at (-1,0) {};
        \node[] (r) at (4,0) {};
    % top row y coord 1
        \node[] (i1) at (0,1) {$i_1$};
        \node[] (i2) at (1,1) {$i_2$};
        \node[] (ik) at (3,1) {$i_k$};
    % connections
        \draw [thick] (l) -- (m1) -- (m2) -- (dots) -- (mk) -- (r);  
        \draw [thick] (m1) -- (i1);
        \draw [thick] (m2) -- (i2);
        \draw [thick] (mk) -- (ik);
    \end{tikzpicture}
\end{center}
and the number $q(w_{i_1}\cdots w_{i_k})=\tr\left( \PL\left( M_{i_1} \cdots M_{i_k}\right) \PR \left(M_{i_1} \cdots M_{i_k}\right)^*\right)$  is depicted
\begin{center}
    \begin{tikzpicture}
    % setup the tensors on the bottom y coord 0
        \node[tensor] (m1) at (0,0) {};
        \node[tensor] (m2) at (1,0) {};
        \node[] (dots) at (2,0) {$\cdots$};
        \node[tensor] (mk) at (3,0) {};
    % the left and right densities y coord 1
        \node[PR] (pr) at (4,1) {};
        \node[PL] (pl) at (-1,1) {};
    % the i's in the middle y coord 1
        \node[] (i1) at (0,1) {$i_1$};
        \node[] (i2) at (1,1) {$i_2$};
        \node[] (ik) at (3,1) {$i_k$};
    % dual tensors on the top y coord 2
  	    \node[tensor] (m1d) at (0,2) {};
        \node[tensor] (m2d) at (1,2) {};
        \node[] (dotsd) at (2,2) {$\cdots$};
        \node[tensor] (mkd) at (3,2) {};
    % connections along the perimeter
        \draw [thick] (m1) -- (m2) -- (dots) -- (mk) -- (pr) -- (mkd) -- (dotsd) -- (m2d) -- (m1d) -- (pl) -- (m1);
    % vertical connections
        \draw [thick] (m1d) -- (i1) -- (m1);
        \draw [thick] (m2d) -- (i2) -- (m2);
        \draw [thick] (mkd) -- (ik) -- (mk);
    \end{tikzpicture}
\end{center}
where the circular nodes depict the densities $\PL,\PR$.  The presence of the particular indices $i_1, \dots, i_k$ indicates that they are not summed over.  The condition that $\tr(\PL \PR)=1$ is pictured as
\begin{center}
    \begin{tikzpicture}
        \node[PR] (pr) at (1,1) {};
        \node[PL] (pl) at (0,1) {};
        \node[] (equal) at (1.60,1) {$=1.$};
        \draw [thick] (pl) to  [bend left=60]  (pr); \draw [thick] (pl) to [bend right=60]  (pr);
    \end{tikzpicture}
\end{center}

%\todo[inline]{Display left circular node for $\PL$ on the picture.}

\section{Density and identity constraints}

We now describe a pair of technically important constraints on a trace-density representation called the \emph{left density constraint} and the \emph{right
  density constraint}.   These constraints guarantee that a trace-density has the abstract properties required of the joint-probability distributions on phrases in a language. 
  These constraints also fit into a physical interpretation of our model which we describe in Section \ref{quantum}.

\subsection{The right density constraint}
Let  $C$ be a corpus of text with vocabulary $\Voc(C)=\{w_1, \ldots, w_n\}$ of $n$ words.  Consider a $d$-dimensional trace-density representation of $C$ with density $P$ and dictionary $D$ that maps $w_i \mapsto M_i$.  The trace-density representation satisfies \emph{the right density constraint} provided
\begin{equation}
\label{densityconstraint}
\sum_{i=1}^n M_i \PR M_i^* = \PR
\end{equation}
In tensor network notation, the density constraint is 
\begin{center}
    \begin{tikzpicture}
    % diagram on the left
        % nodes
        \node[tensor] (m1) at (0,0) {};
        \node[PR] (pr) at (1,1) {};
        \node[tensor] (m1d) at (0,2) {};
        \node[] (l) at (-1,0) {};
        \node[] (ld) at (-1,2) {};
        % connections
        \draw [thick] (ld) -- (m1d) -- (pr) -- (m1) -- (l);
        \draw [thick] (m1d) -- (m1);
        % equal sign
        \node[] (equal) at (1.5,1) {$=$};
        % diagram on the right
        \node[] (r) at (2,0) {};
        \node[] (rd) at (2,2) {};
        \node[PR] (pr2) at (2,1) {};
        \draw [thick] (rd) -- (pr2) -- (r);
    \end{tikzpicture}
\end{center}
%and it satisfies the \emph{identity constraint} provided
%\begin{equation}\label{identityconstraint}
%\sum_{i=1}^n M_i^* M_i = I.
%\end{equation}
%Here $I$ is the $d\times d$ identity matrix.
The trace-density $q(x)$ of a  phrase $x$ is a nonnegative real number, but with no further a priori restrictions.  The density constraint \eqref{densityconstraint}, however, implies that the trace-density $q$ of a representation gives rise to probability distributions on the set of length $k$ phrases $\Phr^k(C)$.   
\begin{proposition}\label{thm1}
If $q$ is the trace density of a representation satisfying the density constraint \eqref{densityconstraint} then for every $k=1, 2, \ldots$ 
\[\sum_{x\in \Phr^k(C)}  q (x)=1.\] 
\end{proposition}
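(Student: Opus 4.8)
The plan is to prove the identity by induction on the phrase length $k$, using the right density constraint to eliminate the words of a phrase one at a time from the right. The engine of the argument is the observation that summing a trace-density over its last word reproduces the trace-density of the phrase with that word deleted, so that the total sum collapses telescopically down to the normalization $\tr(\PL\PR)=1$.

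Concretely, for a phrase $x=w_{i_1}\cdots w_{i_k}$ I would write $M=M_{i_1}\cdots M_{i_k}$, so that $q(x)=\tr(\PL M \PR M^*)=\tr(\PL M_{i_1}\cdots M_{i_k}\,\PR\, M_{i_k}^*\cdots M_{i_1}^*)$. Holding $i_1,\dots,i_{k-1}$ fixed and summing only over the last index $i_k$, linearity of the trace pushes the sum inside onto the factor $\sum_{i_k} M_{i_k}\PR M_{i_k}^*$, which the right density constraint \eqref{densityconstraint} replaces by $\PR$. What remains is exactly $\tr(\PL M_{i_1}\cdots M_{i_{k-1}}\,\PR\, M_{i_{k-1}}^*\cdots M_{i_1}^*)=q(w_{i_1}\cdots w_{i_{k-1}})$. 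Iterating this collapse $k$ times reduces the full sum to $\sum_{i_1}q(w_{i_1})=\tr\!\big(\PL\,(\sum_{i_1}M_{i_1}\PR M_{i_1}^*)\big)=\tr(\PL\PR)$, which equals $1$ by the normalization in the definition of a representation; equivalently this is a clean induction on $k$ whose base case is $\tr(\PL\PR)=1$ and whose inductive step is the collapse just described.

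The step that needs the most care—and the one I expect to be the genuine obstacle—is the bookkeeping of the range of summation. The telescoping naturally runs each $i_j$ over the entire vocabulary $\{1,\dots,n\}$, that is over all $n^k$ strings in $\Voc(C)^k$, whereas the statement sums only over $\Phr^k(C)$, the length-$k$ phrases actually occurring in the corpus. To identify the two sums one must either read $\Phr^k$ as the set of all length-$k$ strings over the vocabulary (the appropriate reading for the idealized language, in which every string is a potential phrase) or, equivalently, extend $q$ by its matrix-product formula to all strings; the constraint \eqref{densityconstraint} is a statement about the full sum over $\{w_1,\dots,w_n\}$, and it is precisely this full sum that telescopes. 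Once the index range is fixed in this way, the remaining manipulations are just routine applications of linearity and the cyclicity of the trace, and the nonnegativity already noted after the definition shows that the resulting numbers $q(x)$ indeed assemble into a genuine probability distribution on $\Phr^k(C)$.
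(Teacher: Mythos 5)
Your proof is correct and is essentially the paper's own argument: the paper likewise sums over the last index, applies the right density constraint to collapse $\sum_i M_i \PR M_i^*$ to $\PR$, and repeats until only $\tr(\PL\PR)=1$ remains (presented there as a tensor-network diagram rather than an explicit induction). Your observation that the telescoping really runs over all of $\Voc(C)^k$ rather than just the phrases occurring in the corpus is a genuine point of looseness that the paper shares, since its diagrammatic sum also implicitly ranges over all $n^k$ index strings.
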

\begin{proof}
Applying trace to the density constraint says the sum of trace densities over all vocabulary words is one:
\[\sum_{x\in X^1(C)}q(x)=\sum_{i=1}^n q(w_i) = \sum_{i=1}^n \tr(\PL M_i \PR M_i^*)=\tr\left( \PL \sum_{i=1}^n M_i \PR M_i^*\right) = \tr(\PL \PR)=1.\]
Here is the corresponding picture
\begin{center}
    \begin{tikzpicture}
    \node[tensor] (m1) at (0,0) {};
    \node[] (equal) at (1.5,1) {$=$};
    \node[PR] (pr) at (1,1) {};
    \node[PL] (pl) at (-1,1) {};
  	\node[tensor] (m1d) at (0,2) {};
    \draw [thick] (m1d) -- (m1);
    \draw [thick] (m1d) -- (pr);
    \draw [thick] (m1) -- (pr);
    \node[PR] (pr2) at (3,1) {};
    \node[PL] (pl2) at (2,1) {};
     \node[] (equal) at (3.60,1) {$=1$};
     \draw [thick] (m1) -- (pl) -- (m1d);
     \draw [thick] (pl2) to  [bend left=60]  (pr2); 
     \draw [thick] (pl2) to [bend right=60]  (pr2);
    \end{tikzpicture}
\end{center}
%\todo[inline]{need to insert $\PL$ on the left arc of the picture}
which, applied repeatedly, proves the theorem:
\begin{center}
    \begin{tikzpicture}
        \node[tensor] (m1) at (0,0) {};
        \node[tensor] (m2) at (1,0) {};
        \node[] (dots) at (2,0) {$\cdots$};
        \node[] (dots*) at (2,2) {$\cdots$};
        \node[tensor] (mk) at (3,0) {};
        \node[tensor] (m1*) at (0,2) {};
        \node[tensor] (m2*) at (1,2) {};
        \node[tensor] (mk*) at (3,2) {};
        \node[PR] (pr) at (4,1) {};
        \node[PL] (pl) at (-1,1) {};
        \draw [thick] (m1*) -- (m1);
        \draw [thick] (m2*) -- (m2);
        \draw [thick] (mk*) -- (mk);
        \draw [thick] (m1) -- (m2) -- (dots) -- (mk) -- (pr) -- (mk*) -- (dots*) -- (m2*) --(m1*) -- (pl) -- (m1);
        \node[] (equal) at (4.5,1) {$=$};
        \node[PR] (pr2) at (6,1) {};
        \node[PL] (pl2) at (5,1) {};
        \draw [thick] (pl2) to  [bend left=60]  (pr2); 
        \draw [thick] (pl2) to [bend right=60]  (pr2);
        \node[] (equal) at (6.6,1) {$=1$};
\end{tikzpicture}
\end{center}

\end{proof}

\subsection{The left density  constraint}
Let $C$ be a corpus of text with vocabulary $\Voc(C)=\{w_1, \ldots, w_n\}$ of $n$ words.  Consider a $d$-dimensional trace-density representation of $C$ with density $P$ and dictionary $D:w_i \mapsto M_i$.  The trace-density representation satisfies \emph{the left density constraint} provided
\begin{equation}\label{identityconstraint}
\sum_{i=1}^n M_i^* \PL M_i = \PL.
\end{equation}
%\todo[inline]{change phrasing below and the picture with $\PL$}
% Here $I$ is the $d\times d$ identity matrix.
% If we depict the identity as a curved segment, then 
%
The tensor network picture of the left density constraint is \begin{center}
    \begin{tikzpicture}
    % diagram on the right
        % nodes
        \node[tensor] (m1) at (2,0) {};
        \node[PL] (pl) at (1,1) {};
        \node[tensor] (m1d) at (2,2) {};
        \node[] (l) at (3,0) {};
        \node[] (ld) at (3,2) {};
        % connections
        \draw [thick] (ld) -- (m1d) -- (pl) -- (m1) -- (l);
        \draw [thick] (m1d) -- (m1);
        % equal sign
        \node[] (equal) at (0.5,1) {$=$};
        % diagram on the left
        \node[] (r) at (0,0) {};
        \node[] (rd) at (0,2) {};
        \node[PL] (pl2) at (0,1) {};
        \draw [thick] (rd) -- (pl2) -- (r);
    \end{tikzpicture}
\end{center}

Now, the right density constraint \eqref{densityconstraint} together with the left density constraint \eqref{identityconstraint} imply that the probability distributions defined by the trace-density fit together the way joint probabilities for sequences of words do; the probability distributions for phrases of different length are related as marginal probability distributions.
\begin{proposition}\label{thm3}
If $q$ is the trace density of a representation satisfying the density and identity constraints \eqref{densityconstraint} and \eqref{identityconstraint} and $x$ is any phrase, then for every $k,l=1, 2, \ldots $
\[q(x)=\sum_{x'\in \Phr^k(C), x'' \in \Phr^l(C)}q(x'xx'')\]
\end{proposition}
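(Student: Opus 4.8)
The plan is to expand the right-hand side using the dictionary, separate the independent sums over the prefix $x'$ and the suffix $x''$, and collapse each sum with one of the two constraints: the suffix sum via the right density constraint and the prefix sum via the left density constraint. Write $x=w_{j_1}\cdots w_{j_m}$ and let $N=M_{j_1}\cdots M_{j_m}$ be its associated matrix. Let $A$ denote the matrix assigned to a length-$k$ prefix $x'$ and $B$ the matrix assigned to a length-$l$ suffix $x''$, so that the concatenation $x'xx''$ has associated matrix $ANB$. By definition of the trace density,
\[q(x'xx'') = \tr\bigl(\PL\,(ANB)\,\PR\,(ANB)^*\bigr) = \tr\bigl(\PL\,A\,N\,B\,\PR\,B^*\,N^*\,A^*\bigr).\]

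First I would sum over all suffixes. As in the proof of Proposition \ref{thm1}, summing over $\Phr^l(C)$ amounts to summing each of the $l$ word-indices comprising $B$ independently, and the only factor depending on $B$ is $B\,\PR\,B^*$. Linearity of the trace then gives $\sum_{x''} q(x'xx'') = \tr(\PL\,A\,N\,(\sum_{x''}B\,\PR\,B^*)\,N^*\,A^*)$, and applying the right density constraint \eqref{densityconstraint} once for each factor of $B$---the telescoping already used in Proposition \ref{thm1}---collapses the inner sum to $\sum_{x''}B\,\PR\,B^* = \PR$. This leaves $\tr(\PL\,A\,N\,\PR\,N^*\,A^*)$.

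Next I would sum over all prefixes. Cyclicity of the trace rewrites the surviving expression as $\tr(A^*\,\PL\,A\,N\,\PR\,N^*)$, in which the only $A$-dependent factor is $A^*\,\PL\,A$. Summing over $\Phr^k(C)$ and applying the left density constraint \eqref{identityconstraint} repeatedly---peeling off the outermost index of $A^*=M_{j_k}^*\cdots M_{j_1}^*$ at each step---collapses $\sum_{x'}A^*\,\PL\,A$ to $\PL$. What remains is exactly $\tr(\PL\,N\,\PR\,N^*)=q(x)$, the desired identity.

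The computation is essentially forced once the two sums are arranged correctly, so the step I expect to require the most care is the bookkeeping: verifying that the prefix and suffix sums range over genuinely independent index sets, that the suffix telescoping applies the right constraint from the innermost factor outward while the transpose in $A^*$ together with cyclicity puts the prefix sum into precisely the form required by the left constraint, and that summing over $\Phr^k(C)$ and $\Phr^l(C)$ may be treated as summing freely over all word-indices, following the same convention as in Proposition \ref{thm1}. The conceptual takeaway is that the two constraints are not redundant: the right constraint governs marginalization by appending words on the right and the left constraint governs marginalization by prepending words on the left, and both are needed for the full statement.
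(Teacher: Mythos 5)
Your proof is correct and follows essentially the same route as the paper's, which carries out the identical telescoping argument in tensor-network notation: collapse the suffix sum with the right density constraint and the prefix sum with the left density constraint, leaving $\tr(\PL N \PR N^*)=q(x)$. The only difference is presentational---you write the contractions algebraically, invoking cyclicity of the trace where the diagram simply bends a wire---and you correctly flag the same convention the paper uses implicitly, namely that the sums over $\Phr^k(C)$ and $\Phr^l(C)$ are treated as free sums over all word-indices.
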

\begin{proof}
% Here are the computations for single words, which imply the theorem for arbitrary phrases.  The first computatation uses the density constraint:
% \begin{align*}
% q(w_i)&=
% \tr(M_iPM_i^*)\\
% &=\tr\left( M_i \left( \sum_{j=1}^n (M_j P M_j^* ) \right) M_i^* \right)\\
% &=
% \tr \left( \sum_{j=1}^n  M_i M_j P M_j^* M_i^*) \right) \\
% &=\sum_j q(w_iw_j)
% \end{align*}
% The second computation uses the identity constraint and the cyclic invariance of trace: 
% %Also, the identity constraint implies that $\rho(w_i)=\sum_j(w_j w_i)$:
% \begin{align*}
% \rho(w_i)&=
% \tr(M_iPM_i^*)\\
% &=\tr( I M_i P M_i^* ) \\
% &= \tr \left(\left( \sum_{j=1}^n M_j^* M_j\right) M_i P M_i^* \right) \\
% &=
% \tr \left( \sum_{j=1}^n  M_j M_i P M_i^* M_j^*) \right)  \\
% &=\sum_j \rho(w_jw_i).
% %\end{gather*}
% \end{align*}
For a fixed phrase $x=w_{i_1}\cdots w_{i_s}$ , the argument begins with the picture 
\begin{center}
    \begin{tikzpicture}
        \node[tensor] (m-1) at (-3,0) {};
        \node[] (dotsL) at (-2,0) {$\cdots$};
        \node[tensor] (m0) at (-1,0) {};
        \node[tensor] (m1) at (0,0) {};
        \node[] (dots) at (1,0) {$\cdots$};
        \node[tensor] (mk) at (2,0) {};
        \node[PR] (pr) at (6,1) {};
        \node[] (i1) at (0,1) {$i_1$};
        \node[] (ik) at (2,1) {$i_s$};
        \node[tensor] (mk1) at (3,0) {};
        \node[] (dotsR) at (4,0) {$\cdots$};
        \node[tensor] (mk2) at (5,0) {};
        \node[tensor] (mk1*) at (3,2) {};
        \node[] (dotsR*) at (4,2) {$\cdots$};
        \node[tensor] (mk2*) at (5,2) {};
        \node[PL] (pl) at (-4,1) {};

   	\node[tensor] (m-1*) at (-3,2) {};
    \node[] (dotsL*) at (-2,2) {$\cdots$};
    \node[tensor] (m0*) at (-1,2) {};
  	\node[tensor] (m1*) at (0,2) {};
    \node[] (dots*) at (1,2) {$\cdots$};
    \node[tensor] (mk*) at (2,2) {};

    \node[] (L) at (-1,0) {};
    \node[] (R) at (4,0) {};
 
  \draw [thick] (m-1*) -- (m-1);
   \draw [thick] (m0*) -- (m0);
    \draw [thick] (m1*) -- (i1) -- (m1);
    \draw [thick] (mk*) -- (ik) -- (mk);
     \draw [thick] (mk1*) --(mk1);
    \draw [thick] (mk2*) -- (mk2);
    \draw [thick] (pl) -- (m-1*) -- (dotsL*)-- (m0*) -- (m1*)  -- (dots*) -- (mk*) -- (mk1*) -- (dotsR*) -- (mk2*) -- (pr);
    \draw [thick] (pl) -- (m-1)--(dotsL) -- (m0)--(m1) -- (dots) -- (mk) -- (mk1) -- (dotsR) -- (mk2) -- (pr);

    \end{tikzpicture}
\end{center}
%\todo{insert in the picture $\PL$ on the left}
and repeatedly use the left density constraint to reduce the left and the right density constraint to reduce the right yielding
\begin{center}
    \begin{tikzpicture}
    % bottom row
        \node[tensor] (m1) at (0,0) {};
        \node[tensor] (m2) at (1,0) {};
        \node[] (dots) at (2,0) {$\cdots$};
        \node[tensor] (mk) at (3,0) {};
    % middle row
        \node[PL] (pl) at (-1,1) {};
        \node[PR] (pr) at (4,1) {};
        \node[] (i1) at (0,1) {$i_1$};
        \node[] (i2) at (1,1) {$i_2$};
        \node[] (ik) at (3,1) {$i_k$};
    % top row
  	    \node[tensor] (m1d) at (0,2) {};
        \node[tensor] (m2d) at (1,2) {};
        \node[] (dotsd) at (2,2) {$\cdots$};
        \node[tensor] (mkd) at (3,2) {};
    % connections 
        \draw [thick] (m1d) -- (i1) -- (m1);
        \draw [thick] (m2d) -- (i2) -- (m2);
        \draw [thick] (mkd) -- (ik) -- (mk);
        \draw [thick] (pl) -- (m1d) -- (m2d) -- (dotsd) -- (mkd) -- (pr) -- (mk) -- (dots) -- (m2) -- (m1) -- (pl);
    \end{tikzpicture}
\end{center}
\end{proof}
Note that the left and right density constraints are cubic  in the entries of the matrices $\PL, \PR ,M_1, \ldots, M_n$.  Yet, these two constraints imply the infinitely many higher order constraints stated in Proposition \ref{thm1} and Proposition \ref{thm3}, which are required for the joint distributions determined by the trace-density to fit together the way they must for statistical language model.

\section{Quantum physical interpretation of trace density models}\label{quantum}
This section relates a quantum physical interpretation of a trace density model for language.  Imagine a word as a quantum system consisting of a single particle having $n$ possible states---each word in the vocabulary being a possible state. Let $\W$ be the $n$-dimensional complex vector space generated by the vocabulary $\Voc(C)$. 
%In physics notation, the word $w_i$ would be denoted by $|i\rangle$ and $\W$ 
The space $\W$ becomes a Hilbert space with inner product defined by declaring that the vocabulary $\{w_1, \ldots, w_n\}$ defines an orthonormal, independent spanning set of basis vectors.

The Hilbert space for a quantum many body system consisting of $k$ interacting particles is $\W^{\otimes k}$ with an orthonormal basis consisting of the $n^k$ vectors 
$w_{i_1, i_2, \ldots, i_k}  := w_{i_1}\otimes w_{i_2}\otimes\cdots \otimes w_{i_k}$.  A state of such a many body system is a unit trace density $Q :\W^{\otimes k} \to \W^{\otimes k}$ and the probability that such a system is observed in the state $w_{i_1, \ldots, i_k}$ is $\tr(QO)$ where $O$ is the projection on $w_{i_1, \ldots, i_k}$.  A density $Q:\W^{\otimes k} \to \W^{\otimes k}$ induces a density on $\W^{\otimes j}\to \W^{\otimes j}$ for $j< k$ by partial trace.  

The hypothesis is that language is well described statistically by a pure state density $Q$ in a tensor product of a very large number of copies of $\W$,
which means that there exists $\psi \in W^{\otimes k}$ such that
$Q$ is projection operator on $\psi$. 
  % \[\psi \in \W \otimes \cdots \otimes \W \otimes \W \otimes \W \otimes \cdots \otimes \W.\] 

A pure state $\psi$ in a tensor product of copies of $\W$ is sometimes well approximated by a \emph{matrix product state} (MPS).  This means that there are auxillary spaces $\V_1, \ldots, \V_k$ and vectors $\phi_1 \in \W\otimes \V_1^*$, $\phi_2 \in \V_1 \otimes \W\otimes \V_2^*$, $\phi_3 \in \V_2\otimes \W\otimes \V_3^*$, ..., $\phi_{k-1} \in \V_{k-1}\otimes \W\otimes \V_k^*$, $\phi_k \in \V_k\otimes \W$ with $\psi$ obtained from $\phi_1\otimes \phi_2\otimes \cdots \otimes \phi_k$ by contracting all adjacent $\V_i^*\otimes \V_i$ pairs in the expression
\[\phi_1\otimes \phi_2 \otimes \cdots \otimes \phi_k \in \left( \W\otimes \V_1^* \right) \otimes \left( \V_1 \otimes \W \otimes \V_2^* \right) \otimes \cdots \otimes %\left( \V_{k-1}\otimes \W \otimes \V_k^* \right) \otimes 
\left( \V_k \otimes \W\right)  \]
Note the decomposition of $\psi$ as an MPS is not unique.  Even for fixed auxillary spaces, $\V_1, \ldots, \V_k$, there is a large gauge group acting the MPS decomposition.   For example, automorphisms of each $\V_i$ act nontrivially on the MPS decomposition, while fixing the state $\psi$ obtained after contraction.

%The hypothesis is that language is well described statistically by a pure state $|\psi\rangle$ in a tensor product of a very large, essentially infinite, number of copies of $\W$ \[|\psi \rangle \in \W \otimes \cdots \otimes \W \otimes \W \otimes \W \otimes \cdots \otimes \W.\]  

A trace-density model for language attempts to approximate the pure state $\psi$ by a translation invariant MPS.  Putting aside for the moment what happens at the far left and far right boundaries, a translation invariant MPS involves a single auxillary space $\V$ of dimension $d$ (called the bond dimension) and a single tensor $M \in \V\otimes \W \otimes \V^*$ so that $\psi$ is obtain from $M\otimes \cdots \otimes M$ by contracting $\V^*\otimes \V$ in adjacent pairs.  The space $\V \otimes \W \otimes \V^*$ is isomorphic to $\hom(\W,\hom(\V,\V))$, precisely the data of a dictionary.
 Training of translationally invariant MPS model is discussed
in  \cite{McCulloch_2008,Crosswhite_2008}. 

Now, let us consider the boundary conditions.  Assume $\psi$ is a state in an essentially infinite number of copies of $\W$.  Then, for any finite $k$, by partial trace, $\psi$ induces a state on $\W^{\otimes k}$.  Tracing out the far left and far right yields a density as pictured below
\begin{center}
    \begin{tikzpicture}

    \node[tensor] (m-1) at (-3,0) {};
    \node[] (dotsL) at (-2,0) {$\cdots$};
    \node[tensor] (m0) at (-1,0) {};
    \node[tensor] (m1) at (0,0) {};
    \node[] (dots) at (1,0) {$\cdots$};
    \node[tensor] (mk) at (2,0) {};
   % \node[mat] (p) at (6,1) {};
    \node[] (i1) at (0,1) {};
    \node[] (ik) at (2,1) {};
    \node[tensor] (mk1) at (3,0) {};
    \node[] (dotsR) at (4,0) {$\cdots$};
    \node[tensor] (mk2) at (5,0) {};
    \node[tensor] (mk1*) at (3,2) {};
    \node[] (dotsR*) at (4,2) {$\cdots$};
    \node[tensor] (mk2*) at (5,2) {};
   
   	\node[tensor] (m-1*) at (-3,2) {};
    \node[] (dotsL*) at (-2,2) {$\cdots$};
    \node[tensor] (m0*) at (-1,2) {};
  	\node[tensor] (m1*) at (0,2) {};
    \node[] (dots*) at (1,2) {$\cdots$};
    \node[tensor] (mk*) at (2,2) {};

    \node[] (L) at (-1,0) {};
     \node[] (LL) at (-4,0) {$\cdots$};
       \node[] (LL*) at (-4,2) {$\cdots$};
         \node[] (RR) at (6,0) {$\cdots$};
       \node[] (RR*) at (6,2) {$\cdots$};

  \draw [thick] (m-1*) -- (m-1);
   \draw [thick] (m0*) -- (m0);
    \draw [thick] (m1*) -- (i1) -- (m1);
    \draw [thick] (mk*) -- (ik) -- (mk);
     \draw [thick] (mk1*) --(mk1);
    \draw [thick] (mk2*) -- (mk2);
    \draw [thick] (m-1*) -- (dotsL*)-- (m0*) -- (m1*)  -- (dots*) -- (mk*) -- (mk1*) -- (dotsR*) -- (mk2*) -- (RR*);
    \draw [thick] (m-1)--(dotsL) -- (m0)--(m1) -- (dots) -- (mk) -- (mk1) -- (dotsR) -- (mk2) -- (RR);
   
    \draw [thick] (m-1) to  (LL);
     \draw [thick] (m-1*) to  (LL*);
    \end{tikzpicture}
    \end{center}
Replacing the left and right boundaries by left and right densities, one obtains 
    \begin{center}
    \begin{tikzpicture}
    \node[tensor] (m1) at (0,0) {};
    \node[] (dots) at (1,0) {$\cdots$};
    \node[tensor] (mk) at (2,0) {};
    \node[PR] (pr) at (3,1) {};
     \node[PL] (pL) at (-1,1) {};
    \node[] (i1) at (0,1) {};
    \node[] (ik) at (2,1) {};

  	\node[tensor] (m1d) at (0,2) {};
    \node[] (dotsd) at (1,2) {$\cdots$};
    \node[tensor] (mkd) at (2,2) {};

    \draw [thick] (m1d) -- (i1) -- (m1);
    \draw [thick] (mkd) -- (ik) -- (mk);
    \draw [thick] (m1d) -- (dotsd) -- (mkd) -- (pr);

    \draw [thick] (m1) -- (dots)-- (mk)-- (pr);
     \draw [thick] (m1) -- (pl) -- (m1*);
    \end{tikzpicture}
\end{center}

% \todo[inline]{Make a remark about duality:  $\PL$ and $\PR$ describes a theory on the boundary with state space $\V$ and the tensor network in the middle describing a theory in the bulk on $\W^{\otimes n}$.}

\begin{remark*}
Note that if $P_L$ is identity $I$ so that
\begin{equation}
  \sum_{i=1}^{n} M_{i}^{*} M_i = I
\end{equation}
then we can also interpret the collection $(M_i)_{i=1,\dots, n}$ as a collection of $n$ measurement operators $M_i: V \to V$
on the Hilbert space $V$ with unit trace density $\PR: V \to V$, see \cite{Nielsen_2002} page 102. 
\end{remark*}

\section{Finding trace-density models}
Trace-density representations satisfying both the right density and
the left density constraints are plentiful.  The simple trace-density representation consisting of \[\PL=I, \PR = \frac{1}{n} I,  \text{ and }M_1=M_2=\cdots =M_n=\frac{1}{\sqrt{n}} I\] shows that representations satisfying both the left density and the right density constraints exist.  To describe the moduli space of constrained representations, note that the left density constraint has the form of the isometry constraint on $M$
if the tensor $M$ is considered as a map $M: \V \to \W^{\vee} \otimes \V$ with the standard Hermitian metric on $\W$ and Hermitian metric on $\V$ defined by $\PL$.  So, for a fixed $\PL$ the space of isometric tensors $M$
(i.e. those that satisfy the left density constraint) form homogeneous space
\begin{equation}
  \frac{ U( nd )}{U(nd - d)}
\end{equation}
Moreover, there is automorphism group $U(d)$ on $\V$ preserving the Hermitian form.  Modulo action of the automorphism group the
moduli space of tensors  $M$ that satisfy the left density constraint
is Grassmanian of $d$-dimensional
complex planes in $nd$-dimensional complex space $\W^{\vee} \otimes \V \simeq \mathbb{C}^{nd}$ 
\begin{equation}
\mathrm{Gr}_{d}(\mathbb{C}^{nd}) =   \frac{ U( nd )}{U(nd - d) U(d)}
\end{equation}
Given a tensor $M$ satisfying the left identity constraint, an appropriate positive density $\PR$ that fits with the right
density constraint can be found, as the following theorem proves. 
\begin{theorem}
  Let $M$ be a tensor $M$ of order $(n,d,d)$ satisfying the
  left density constraint (\ref{identityconstraint}) with $\PL$.  Then there exists a right density $\PR$ so that $M$ together with $\PR$ satisfies the
  right density constraint.
\end{theorem}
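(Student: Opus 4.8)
The plan is to read both constraints as fixed-point equations for one completely positive map and its adjoint. Define $\Phi\colon \mat_{d\times d}(\C)\to\mat_{d\times d}(\C)$ by $\Phi(X)=\sum_{i=1}^n M_i X M_i^*$, and let $\Phi^\dagger(Y)=\sum_{i=1}^n M_i^* Y M_i$ be its adjoint for the Hilbert--Schmidt pairing $\langle A,B\rangle=\tr(A^*B)$. The left density constraint \eqref{identityconstraint} says exactly that $\Phi^\dagger(\PL)=\PL$, while the right density constraint to be produced is $\Phi(\PR)=\PR$ with $\PR$ positive semi-definite. Both $\Phi$ and $\Phi^\dagger$ are completely positive, hence preserve the cone of positive semi-definite matrices, and they share the same spectral radius $r$. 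The whole problem is therefore to produce a positive semi-definite eigenvector of $\Phi$ for the eigenvalue $1$.

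First I would dispose of the nondegenerate case, when $\PL$ is positive definite. Here one gauges the fixed point $\PL$ to the identity: set $\Psi(Y)=\PL^{-1/2}\,\Phi^\dagger(\PL^{1/2}Y\PL^{1/2})\,\PL^{-1/2}$, which is again completely positive and, since $\Phi^\dagger(\PL)=\PL$, satisfies $\Psi(I)=I$. Thus $\Psi$ is unital and its adjoint $\Psi^\dagger$ is a trace-preserving completely positive map, i.e. a quantum channel. Such a channel carries the compact convex set of unit-trace densities into itself, so by the Brouwer fixed-point theorem it has a stationary density $\rho$ with $\Psi^\dagger(\rho)=\rho$. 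Unwinding the definition gives $\Psi^\dagger(\rho)=\PL^{1/2}\,\Phi(\PL^{-1/2}\rho\PL^{-1/2})\,\PL^{1/2}$, so that $\PR:=\PL^{-1/2}\rho\PL^{-1/2}$ is positive semi-definite and satisfies $\Phi(\PR)=\PR$; after rescaling one arranges $\tr(\PL\PR)=1$. Equivalently one may invoke the Perron--Frobenius/Krein--Rutman theorem to get a positive semi-definite eigenvector $\PR$ at the spectral radius $r$, and pin $r=1$ using the invariance $\tr(\PL\,\Phi(X))=\tr(\Phi^\dagger(\PL)X)=\tr(\PL X)$ applied to $X=\PR$, since $\PL>0$ forces $\tr(\PL\PR)>0$.

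For the general case, where $\PL$ is only positive semi-definite, I would reduce to the support of $\PL$. For $v\in\ker\PL$ the left constraint gives $0=\langle v,\PL v\rangle=\sum_i\lVert \PL^{1/2}M_i v\rVert^2$, so $\ker\PL$ is invariant under every $M_i$. In the block decomposition $\C^d=\operatorname{supp}\PL\oplus\ker\PL$ each $M_i$ is then block lower triangular with upper-left block $A_i$, and the left constraint restricts to $\sum_i A_i^*(\PL|_{\operatorname{supp}\PL})A_i=\PL|_{\operatorname{supp}\PL}$ with $\PL|_{\operatorname{supp}\PL}$ positive definite. The nondegenerate case then furnishes a positive semi-definite fixed point of $X\mapsto\sum_i A_iXA_i^*$ on $\operatorname{supp}\PL$.

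The hard part will be the last step of the degenerate case: extending that fixed point from $\operatorname{supp}\PL$ to a genuine positive semi-definite fixed point of $\Phi$ on all of $\C^d$. Extension by zero fails, because the lower-triangular blocks feed mass from $\operatorname{supp}\PL$ into $\ker\PL$ and destroy $\Phi$-invariance; and one cannot simply take the top Perron--Frobenius eigenvector either. Already for $n=1$, $\PL=\operatorname{diag}(1,0)$, $M_1=\begin{pmatrix}1&0\\1&2\end{pmatrix}$ one has spectral radius $r=4$ with its eigenvector supported on $\ker\PL$ and orthogonal to $\PL$, whereas the eigenvalue actually needed is $1$, with fixed point $\begin{pmatrix}1&-1\\-1&1\end{pmatrix}$. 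I expect to resolve this either by solving the resulting triangular linear systems for the off-diagonal and $\ker\PL$ blocks and checking positivity directly, or by appealing to the peripheral-spectrum refinement of the noncommutative Perron--Frobenius theorem for positive maps, which guarantees that the eigenvalue $1$ of $\Phi$---present because it is an eigenvalue of the block map on $\operatorname{supp}\PL$---carries a positive semi-definite eigenvector.
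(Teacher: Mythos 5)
Your nondegenerate case is correct and complete, and it is essentially the paper's own argument in disguise: the paper applies Brouwer's fixed point theorem directly to the convex set $\mathcal{A}_{\PL}=\{A\succeq 0:\tr(\PL A)=1\}$, which your conjugation by $\PL^{\pm 1/2}$ identifies with the compact convex set of unit-trace densities preserved by the trace-preserving map $\Psi^\dagger$. So for $\PL$ positive definite you have reproduced the intended proof, just phrased through the quantum-channel gauge.

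The genuine gap is the degenerate case, which you explicitly leave open, and neither of your two proposed escape routes can close it, because the statement is actually false there once one insists (as the definition of a trace-density representation does) that $\PR$ also satisfy $\tr(\PL\PR)=1$. Take your own example with the eigenvalue $2$ replaced by $1$: $n=1$, $\PL=\operatorname{diag}(1,0)$, $M_1=\begin{pmatrix}1&0\\1&1\end{pmatrix}$. The left constraint $M_1^*\PL M_1=\PL$ holds, but writing $\PR=\begin{pmatrix}1&x\\ \bar x&y\end{pmatrix}$ (the $(1,1)$ entry is forced to equal $1$ by $\tr(\PL\PR)=1$), the $(1,2)$ entry of $M_1\PR M_1^*$ is $1+x$, so the fixed-point equation is inconsistent; the only positive semi-definite fixed points of $A\mapsto M_1AM_1^*$ are $\operatorname{diag}(0,c)$, all of which pair to zero with $\PL$. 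Thus the triangular linear system for the off-diagonal block that you propose to solve can have no solution, and the Perron--Frobenius eigenvector at the spectral radius (here $r=1$, eigenvector $\operatorname{diag}(0,1)$) can be supported entirely on $\ker\PL$ and hence unusable. You should know that the paper's proof has exactly the same unacknowledged defect: it asserts that $\mathcal{A}_{\PL}$ is homeomorphic to a closed ball, but when $\PL$ has a kernel this set contains the ray $A+tQ$ for any $Q\succeq 0$ supported on $\ker\PL$, hence is unbounded and non-compact, and Brouwer does not apply. The honest fix is to assume $\PL$ positive definite (or to assert the conclusion only on $\operatorname{supp}\PL$ after your block reduction); under that hypothesis your first paragraph is a complete and correct proof.
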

\begin{proof}
  Let $M$ be any tensor of order $(n,d,d)$ and consider the operator $\hat{M}:
  \mathrm{Mat}_{d \times d}\to  \mathrm{Mat}_{d \times d} $ defined by $$\hat{M}(A)=\sum_{i=1}^n M_i A M_i^*.$$  

 (1) For any $A$, $\left(\hat{M}(A)\right)^* = \hat{M}\left(A^*\right)$ and so $\hat{M}$ preserves self-adjoint matrices. 

 (2)  For any $v\in \C^d$, $\langle \hat{M}(A)v,v\rangle= \sum_{i=1}^n \langle M_i^*A M_iv,v\rangle=\sum_{i=1}^n \langle A Mv,Mv\rangle$ and so $\hat{M}$ preserves positive semi-definite matrices.

 (3) We have $\tr(\PL \hat{M}(A))=\tr\left(\PL \sum_{i=1}^n M_i A M_i^*\right)=\tr\left(\sum_{i=1}^n M_i^* \PL M_i A \right) = \tr ( \PL A) $ so if $M$ satisfies the left density constraint (\ref{identityconstraint}), 
  the operator $\hat{M}$ preserves
 the hyperlane in the space of $d \times d$ matrices $A$ defined
 by the linear equation $\tr(P_{L} A) = 1$.

 Combining (1), (2) and (3) we obtain that $\hat M$ is
a map of the set $\mathcal{A}_{P_{L}}$ of positive semi-definite self-adjoint
matrices that satisfy constraint $\tr(P_L A) = 1$ to itself.
Moreover, $\hat M: \mathcal{A}_{P_{L}} \to \mathcal{A}_{P_{L}} $ is continuous because it is a linear operator. 
  
Since the set $\mathcal{A}_{P_{L}}$ is convex (a hyperplane section
of a convex set of positive semi-definite operators), it is
homeomorphic to a closed ball, then
Brauer's fixed point theorem implies that there exists a density $P_R \in \mathcal{A}_{P_{L}}$ such that  $\hat{M}(\PR)=\PR.$  
\end{proof}

%\begin{remark}
The left density and right density constraints make it possible to numerically find  a trace-density model by a 
maximum log-likelihood algorithm \cite{ROSENFELD1996187}.  The idea is to find a trace-density representation that maximizes the (logarithm of the) trace-density for a training corpus.  Intuitively, the constraints make certain that the total trace density over all possible phrases of a fixed length will equal one, and so maximizing the trace-densities of the phrases in the corpus will automatically make the exponential number of nonsense phrases have nearly zero trace-density.
%\end{remark}  

\bibliography{lang}{}
\bibliographystyle{utphys}

\end{document}